\newtheorem{lemma}{Lemma}
\newtheorem{theorem}{Theorem}
\newtheorem{assumption}{Assumption}
\DeclareMathOperator*{\argmin}{argmin}
\begin{document}

% paper title
\title{Pixel-Wise Motion Deblurring of Thermal Videos}

% You will get a Paper-ID when submitting a pdf file to the conference system
% \author{Paper ID: 1238. Author Names Omitted for Anonymous Review.}

% avoiding spaces at the end of the author lines is not a problem with
% conference papers because we don't use \thanks or \IEEEmembership

% for over three affiliations, or if they all won't fit within the width
% of the page, use this alternative format:
% 
\author{\authorblockN{Manikandasriram S.R.$^{1}$,
Zixu Zhang$^{1}$,
Ram Vasudevan$^{2}$, and 
Matthew Johnson-Roberson$^{3}$}
\authorblockA{Robotics Institute$^{1}$, Mechanical Engineering$^{2}$, Naval Architecture and Marine Engineering$^{3}$}
\authorblockA{University of Michigan, Ann Arbor, Michigan, USA 48109.}
\authorblockA{\texttt{\{srmani, zixu, ramv, mattjr\}@umich.edu}}
\authorblockA{ \texttt{\url{https://fcav.engin.umich.edu/papers/pixelwise-deblurring}}}
}

\maketitle

\begin{abstract}
Uncooled microbolometers can enable robots to see in the absence of visible illumination by imaging the ``heat'' radiated from the scene. 
Despite this ability to see in the dark, these sensors suffer from significant motion blur. 
This has limited their application on robotic systems. 
As described in this paper, this motion blur arises due to the \emph{thermal inertia} of each pixel.
This has meant that traditional motion deblurring techniques, which rely on identifying an appropriate spatial blur kernel to perform spatial deconvolution, are unable to reliably perform motion deblurring on thermal camera images. 
To address this problem, this paper formulates reversing the effect of thermal inertia at a single pixel as a Least Absolute Shrinkage and Selection Operator (LASSO) problem which we can solve rapidly using a quadratic programming solver. 
By leveraging sparsity and a high frame rate, this pixel-wise LASSO formulation is able to recover motion deblurred frames of thermal videos without using any spatial information.
To compare its quality against state-of-the-art visible camera based deblurring methods, this paper evaluated the performance of a family of pre-trained object detectors on a set of images restored by different deblurring algorithms.
All evaluated object detectors performed systematically better on images restored by the proposed algorithm rather than any other tested, state-of-the-art methods.
\end{abstract}

\IEEEpeerreviewmaketitle

\section{Introduction}
Robots need to operate in a variety of lighting and weather conditions.
Since individual sensing modalities can suffer from specific shortcomings, robotic systems typically rely on multiple modalities~\cite{DBLP:journals/sensors/RosiqueNFP19}.
Visible cameras, for instance, often fail to detect objects hidden in shadows, in poor lighting conditions such as those that arise at night, or those behind sun glare~\cite{DBLP:journals/ral/RamanagopalAVJ18}.
LIDARs, on the other hand, suffer from spurious returns from visual obscurants such as snow, rain, and fog~\cite{DBLP:conf/ivs/BijelicGR18}.

To address these sensor limitations, one can apply thermal infrared cameras that operate in the Long Wave Infrared (LWIR, $7.5\mathrm{\mu m} - 14\mathrm{\mu m}$) region~\cite{DBLP:journals/mva/GadeM14}.
These cameras capture the thermal blackbody radiation emitted by all object surfaces at \emph{``earthly''} temperatures and can therefore operate even in the complete absence of visible light.
Note that these thermal cameras are different from night vision cameras that either operate in Near-Infrared (NIR, $0.75\mathrm{\mu m}-1.4\mathrm{\mu m}$) or use image intensifiers in the visible spectrum~\cite{oatao11729}.
Unlike visible and night-vision cameras that rely on reflected light from a few light sources, thermal cameras depend primarily on emitted radiation, which eliminates shadows and reduces lighting artifacts. 
The longer wavelengths used by these cameras also allow them to see better through visual obscurants~\cite{DBLP:journals/sensors/PericLPV19}.

Despite their potential applicability, these sensors suffer from significant motion blur, as illustrated in Fig.~\ref{fig:pitch_blur}. 
As a result, the research community has focused on scenarios where the relative magnitudes of camera motion, scene geometry and camera resolution can be restricted to limit motion blur \cite{DBLP:journals/tmm/LiuHLZ20, DBLP:journals/tits/ChoiKHPYAK18,DBLP:journals/pr/LiLLZT19,DBLP:conf/cvpr/TreibleSSKOPSK17}.
Unfortunately, such restrictions can be difficult to accommodate during mobile robot applications.
In addition, the inability to control exposure time and the lack of a global shutter make microbolometers even more challenging to use. 

\begin{figure}[t]
    \centering
    \begin{subfigure}{\linewidth}
        \centering   
        \includegraphics[width=\linewidth]{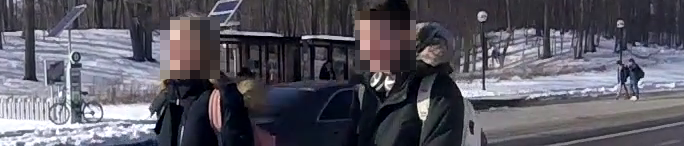}
    \end{subfigure}\\[5pt]
    \begin{subfigure}{\linewidth}
        \centering
        \includegraphics[width=\linewidth]{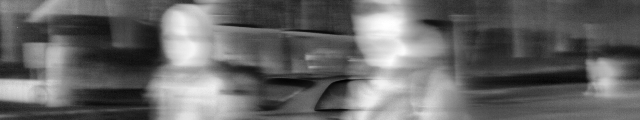}
    \end{subfigure}\\[5pt]
    \begin{subfigure}{\linewidth}
        \centering
        \includegraphics[width=\linewidth]{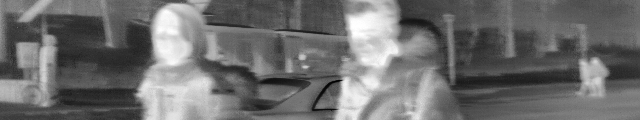}
    \end{subfigure}
    \caption{An illustration of the proposed motion deblurring algorithm for microbolometers. The top image shows a visible image captured at $30\mathrm{fps}$  with auto exposure. The middle image shows a thermal image captured at $200\mathrm{fps}$ which suffers from significant non-uniform motion blur. As described in this paper, the lack of exposure control and the difference in the physics of image formation aggravates the image degradation due to motion blur in thermal images. The bottom image shows the result from our algorithm that processes each pixel independently. Our model-based approach is able to eliminate the blur without any visible artefacts.}
    \label{fig:pitch_blur}
\end{figure}

To address these motion blur problems, one can rely on another class of thermal cameras called cooled photon detectors~\cite{rogalski2012history}.
These sensors use the photoelectric effect, i.e. the same physics underlying CCD and CMOS sensors, which enables controllable exposure time and global shutter and, as a result, limits motion blur.
However, these cooled photon detectors are prohibitively expensive since they require cooling the sensor to $77\mathrm{K}$ to eliminate self-emissions~\cite{barela2017comparison}.
Therefore they are used only in high-end applications. 

Uncooled microbolometers, on the other hand, due to their affordability, low weight and low power consumption, have brought thermal imaging to consumer applications.
As a result, they are the predominant type of thermal cameras used by the robotics and vision research community today~\cite{DBLP:journals/mva/GadeM14}.
These sensors use the bolometer principle, where the incoming radiation heats up a detector material that has a temperature dependent electrical resistance~\cite{vollmer2017infrared}. 
Unlike the photoelectric effect, temperature changes take a non-negligible amount of time, which leads to a lag in these cameras with respect to changes in the scene.
This paper illustrates that this lag, which we refer to as \emph{thermal inertia}, gives rise to motion blur in thermal images.
For applications with continuous motion, such as autonomous vehicles, the sensors never reach steady state.
The measured temperatures might then be incorrect, making these images an inaccurate representation of the scene.
Unfortunately, designing microbolometers with lower thermal inertia without increasing sensor noise remains a significant technological challenge~\cite{boudou2019ulis}.
Therefore, motion deblurring algorithms are required to accurately recover the instantaneous scene representation from the data measured by these microbolometers.

There has been limited success in developing general motion deblurring algorithms for microbolometers~\cite{vollmer2017infrared}.
Motion blur is typically modelled as a latent sharp image convolved with an unknown blur kernel, which can vary across the image. 
As a result, motion deblurring requires estimating the blur kernel and using deconvolution to recover the latent sharp image.
For instance,  \citet{oswald2018temperature} construct an exponential blur kernel using the thermal time constant for uniform motions of a target object against a constant background. 
\citet{nihei2019simple} use an Inertial Measurement Unit (IMU) to estimate the camera motion and provide an approximate correction equation to recover the steady state value of a target pixel.
In addition, one could consider applying algorithms developed for visible cameras to thermal images~\cite{DBLP:books/cu/RH2014}. 
For example, \citet{DBLP:conf/cvpr/YanRGWC17} formulate deblurring as a maximum \textit{a posteriori} problem that alternatively optimizes the latent image and the blur kernel with an additional image prior, which promotes sparsity of image gradients.
\citet{DBLP:journals/ijcv/WhyteSZP12} consider non-uniform blurring caused due to 3D rotation of the camera against a static scene.
\citet{DBLP:conf/iccv/BahatEI17} estimate per-pixel blur kernels to accommodate dynamic objects in the scene.
More recently, there have been a number of learning based methods proposed for single-image deblurring~\cite{DBLP:conf/cvpr/NahKL17, DBLP:conf/cvpr/TaoGSWJ18,DBLP:conf/iccv/KupynMWW19} and multi-image/video deblurring~\cite{DBLP:conf/iccv/ZhouZPZXR19,DBLP:conf/cvpr/WangCYDL19}.

This paper diverges from these conventional deblurring approaches that perform kernel estimation and subsequent latent image recovery, and instead focuses on devising a method that can correct for the microbolometer-specific cause for motion blur.
In particular, we show that the lag introduced by the bolometer principle at each individual pixel directly gives rise to the motion blur. 
By reversing this effect of thermal inertia pixel-wise without any spatial constraints, from a sequence of high frame rate measurements, we recover motion deblurred images without explicitly modeling the motion of the camera, scene depth, or scene dynamics. 
To this end, we construct an under-determined system of linear equations, which has infinitely many solutions, by discretizing at a high rate the differential equation that models the thermal image formation process. 
We then leverage temporal sparsity to select a unique solution while being robust to measurement noise by using the Least Absolute Shrinkage and Selection Operator (LASSO) and solve it using quadratic programming solvers.
Our formulation works across a variety of scenes with arbitrary 6D relative motions between the camera and the dynamic objects within the scene.
In addition, our method is uniquely suited for robotic vision tasks where an image is often represented by a sparse set of keypoints. 
While existing approaches at-least require deconvolution over small patches covering the sparse keypoints, our proposed approach just requires the data from the sparse pixels of interest. 

The key contributions of our paper are summarized below:
\begin{itemize}
    \item In Section \ref{sec:inverting_hysteresis}, we show that reversing the effect of thermal inertia at an independent pixel can be formulated as solving an under-determined system of linear equations.
    \item In Section \ref{sec:deblurring_equivalence}, we show that assuming temporal sparsity of the signal pixel-wise is sufficient to recover motion deblurred frames from thermal videos using LASSO without requiring any training data.
    \item In Section \ref{sec:results}, we illustrate that our model based approach outperforms state-of-the-art learning based single-image and multi-image deblurring algorithms. In particular, we show that a variety of object detectors performed systematically better when using the images deblurred by our proposed algorithm rather than any of the five tested, state-of-the-art methods.
\end{itemize}
The remainder of our paper is organized as follows.
Section \ref{sec:imaging_models} reviews the origins of motion blur in visible cameras and microbolometers based on their physics of image formation.
Our key contributions are presented in Sections \ref{sec:inverting_hysteresis}, \ref{sec:deblurring_equivalence} and \ref{sec:results}.
Finally, Section \ref{sec:discussion} concludes the paper with a brief discussion on the insights offered by our proposed framework.

% This demo file is intended to serve as a ``starter file" for the
% Robotics: Science and Systems conference papers produced under \LaTeX\
% using IEEEtran.cls version 1.7a and later.  
% Discussing related works in isolation is hard for this paper since many concepts need to be introduced before they can be referenced
\section{Image Formation Models}
\label{sec:imaging_models}

This section reviews the underlying physics of image formation to identify the cause of motion blur as summarized in Fig.~\ref{fig:image_formation_model}.
In particular, we focus on two distinct Image Formation Models (IFM): the Photoelectric IFM for visible cameras and the Microbolometer IFM for thermal cameras. 
We present both models to emphasize the distinct reasons that give rise to motion blur in each IFM.

\begin{figure}[t]
    \includegraphics[width=\linewidth]{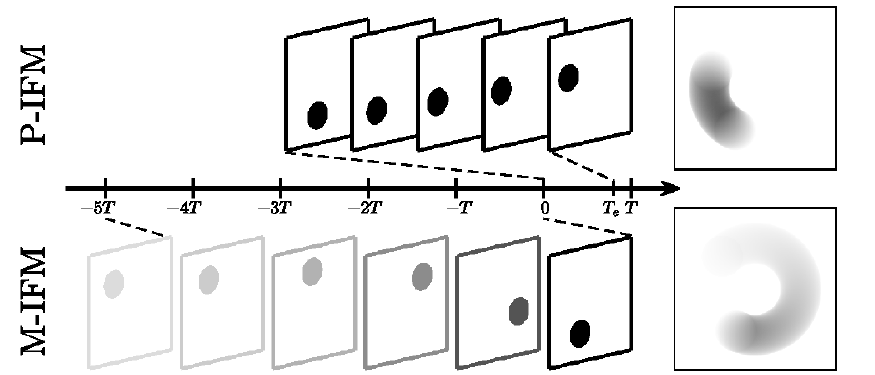}
    \caption{An illustration of motion blur in photoelectric sensors (top) and microbolometers (bottom). 
    Each frame on the left represents a time slice of instantaneous power from the scene. 
    P-IFM averages over time slices during exposure while M-IFM gives more weight to recent frames as depicted by the transparency in this figure. 
    The blurred images on the right were generated by pixelwise average for P-IFM and by simulating a microbolometer with $\tau=T$ for M-IFM. 
    Note that the blur looks symmetric for P-IFM since each slice is equally weighted while the motion blur in M-IFM appears as a blur-\textit{tail}.}
    \label{fig:image_formation_model}
\end{figure}

\subsection{Photoelectric Image Formation Model}
\label{subsec:photoelectric_model}

In Photoelectric IFM (P-IFM), the sensor is exposed for a finite duration to incoming Electro-Magnetic (EM) radiation from the scene and the number of photons ``collected'' is digitized into an image. 
The registers are reset to zero before the next frame is captured.

Let $\Phi(p)$ denote the total power from a point $p$ in the scene 
and $T_e$ denote the exposure time.
In the case of global shutter and a linear camera response function, the value of pixel $(i, j)$ can be written as
\begin{equation}
    I(i, j) = \frac{1}{T_e} \int_{0}^{T_e} \Phi(p_{i,j}(s)) ds,
    \label{eq:photoelectric_ifm}
\end{equation}
where $p_{i,j}(s)$ is the scene point being imaged by pixel $(i,j)$ at time $s$.
In the absence of relative motion during exposure, $p_{i,j}$ corresponds to a single scene point and the integrand above simplifies to a constant.

When there is relative motion during exposure, $p_{i,j}$ instead corresponds to a sequence of scene points, resulting in motion blur. 
To describe this more explicitly, let $\delta s \in \mathbbm{R}$ with $\frac{T_e}{\delta s} \in \mathbbm{N}$, and suppose we divide the total exposure time into $\frac{T_e}{\delta s}$ intervals each of length $\delta s$, such that for each $k \in \{0,\ldots, \frac{T_e}{\delta s} - 1\}$, $p_{i,j}(s) = p_{i,j}^k$ is constant for all $s \in [k \delta s, (k + 1) \delta s)$. 
Then
\begin{equation}
    I(i, j) = \frac{1}{T_e} \sum_{k=0}^{\frac{T_e}{\delta s} - 1} \Phi(p_{i,j}^k)\delta s.
\end{equation}

The objective of motion deblurring algorithms is to recover an image of the scene at the start of exposure or equivalently $\Phi(p_{i,j}^0)$. 
Over the duration of exposure, the relative motion causes different pixels to image a single scene point. 
Therefore, the scene point imaged by $(i,j)$ at the $k$-th interval would have been imaged by pixel $(x,y)$ at the start of exposure.
In other words, every $k$-th slice can be obtained by warping $\Phi(p_{i,j}^0)$.
In the computer vision literature~\cite{DBLP:books/cu/RH2014}, $\Phi(p_{i,j}^0)$ is referred as the latent sharp image, $L$, and the blurred image is written as 2D convolution
\begin{equation}
    I(i,j) = \iint H(i-x, j-y)L(x, y)\, dx\, dy,
\end{equation}
where $H$ is referred to as a Point Spread Function (PSF) or blur kernel. 
The blur kernel models how much time pixel $(i,j)$ was exposed to the scene point that would have been imaged at $(x,y)$ at the start of the exposure.
As a result, this explicitly models the relative motion and scene dynamics. 

\subsection{Microbolometer Image Formation Model}
\label{subsec:microbolometer_model}

In Microbolometer IFM (M-IFM), the sensor is constantly exposed to incoming EM radiation from the scene.
The resulting change in the temperature of a pixel changes its electrical resistance.
A Read-Out Integrated Circuit (ROIC) measures the current electrical resistance at regular intervals and computes corresponding temperature values to be returned by the camera.
When the incoming radiation is continuously changing, the microbolometer is unable to reach an
equilibrium before the ROIC takes the next measurement. 
Moreover, there is no ``reset'' mechanism at the end of a frame. 
The microbolometer can be modeled by \cite{boudou2019ulis, topaloglu2009analysis, vollmer2017infrared}
\begin{equation}
    \frac{C_{th}}{G_{th}}\frac{d \Delta T}{dt}(t) + \Delta T(t) = \frac{\alpha}{G_{th}} \Phi(p(t)),
\label{eq:microbolometer_heat_eqn}
\end{equation}
where $C_{th}$, $G_{th}$ and $\alpha$ are constants representing the thermal capacitance, thermal conductance, and absorptivity of the microbolometer respectively, with $\Phi$ and $p$ as in the previous subsection.
$\Delta T(t) = T_m(t) - T_s$ is the difference in temperature at time $t$ between the pixel membrane $T_m$ and a constant substrate temperature $T_s$. 

We can define the steady state temperature of the pixel membrane $T_m^{ss}(p(t))$ corresponding to the power $\Phi(p(t))$ as 
\begin{equation}
    T_m^{ss} (p(t)) = T_s + \frac{\alpha}{G_{th}} \Phi(p(t)).
    \label{eq:T_mss_to_Phi}
\end{equation}
Inserting in \eqref{eq:microbolometer_heat_eqn}, we get
\begin{equation}
    \tau \frac{d T_m}{dt}(t) + T_m(t) = T_m^{ss}(p(t)),
    \label{eq:microbolometer_rc_form}
\end{equation}
where $\tau = \frac{C_{th}}{G_{th}}$ is the thermal time constant of the microbolometer.
The general solution to \eqref{eq:microbolometer_rc_form} can be obtained by integration-by-parts as:
\begin{equation}
    T_{m}(t) = \frac{1}{\tau} \int_{-\infty}^{t} e^{\frac{s-t}{\tau}} T_{m}^{ss}(p(s)) ds.
    \label{eq:microbolometer_gen_soln}
\end{equation}
Note that \eqref{eq:microbolometer_gen_soln} is the solution of a first order linear differential equation, and $T_m^{ss}$ is proportional to $\Phi$ from \eqref{eq:T_mss_to_Phi}. 
As a result, a step change in $\Phi$ would require a delay of $5\tau$ before $T_m$ reaches within $1\%$ of the steady state value. 
This lag in the temperature of the microbolometer $T_m$ with respect to changes in the incoming power $\Phi$ is what we call the \textit{thermal inertia} of each pixel.

Every pixel follows the same model and hence the value of pixel $(i,j)$ returned by the camera at time $t$ can be written as 
\begin{equation}
    I(i, j) = \frac{1}{\tau} \int_{-\infty}^{t} e^{\frac{s-t}{\tau}} T_m^{ss}(p_{i,j}(s)) ds.
    \label{eq:microbolometer_ifm}
\end{equation}
Since $T_m^{ss}$ is proportional to $\Phi$, as a result of \eqref{eq:T_mss_to_Phi}, the integral in \eqref{eq:microbolometer_ifm} can be interpreted as a weighted sum of the incoming power over the history of the pixel with the weight exponentially decaying into the past.

In the absence of relative motion over the last $5\tau$ seconds, the pixel reaches steady state such that $I(i,j) \approx T_m^{ss}(p_{i,j}(t))$. 
However, when there is relative motion, the value of a pixel depends on $p_{i,j}$, resulting in apparent motion blur. 
The objective of motion deblurring for thermal images is to recover $T_m^{ss}(p_{i,j}(t))$ for each $i$ and $j$.

\subsection{Spatial vs Temporal Spread}
\label{subsec:spatial_vs_temporal}

In P-IFM, the exposure time is divided into small intervals within which the incoming power is described by a single scene point.
During motion, a single pixel's intensity is described by the power of a variety of points in the world. 
To identify a latent sharp image, one can identify a blur kernel and apply spatial deconvolution.
One could follow a similar approach for \eqref{eq:microbolometer_ifm} wherein the blur kernel would additionally need to model the time dependent exponential weight factor and the time constant.
The effective exposure time that needs to be considered would also be larger than the typical values of $T_e$.
In addition, the challenges associated with estimating $H$ to describe the 6D relative motion in dynamic scenes would only further aggravate the problem.

This work takes a different tact. 
The reset mechanism at the end of a frame in P-IFM renders each frame as an independent measurement of the scene.
However, two consecutive frames taken at $nT$ and $(n+1)T$ in M-IFM have a common history $[(n+1)T-5\tau, nT]$ where $T$ is the sampling period of the camera and is chosen such that $T<5\tau$.
For instance, if $T=\tau$, then $5$ consecutive measurements have different portions of $T_m^{ss}(p(nT))$ in it.
Suppose we have an algorithm that can undo this temporal spread to recover $T_m^{ss}(p(nT))$ given a set of measurements $T_m(nT)$, then such an algorithm achieves motion deblurring for thermal images without modeling the relative motion of the camera with the scene. 
The next section poses this inverse problem as selecting a unique signal from the solutions of an under-determined system of linear equations. 
\section{Reversing the Effect of Thermal Inertia}
\label{sec:inverting_hysteresis}

To understand how to reverse the thermal inertia in microbolometers, lets consider the following differential equation
\begin{equation}
    \tau \frac{dy}{dt}(t) + y(t) = x(t),
    \label{eq:model_ode}
\end{equation}
where $\tau$ is a known constant, $x$ is the unknown signal of interest and $y$ is the signal from which we have discrete samples $\{y(t_0), y(t_1), \ldots, y(t_N)\}$.
Our objective is to recover the signal $x$ within the right open interval $[t_0, t_N)$ that respects \eqref{eq:model_ode} and the given samples of $y$.
Note that \eqref{eq:model_ode} is equivalent to \eqref{eq:microbolometer_rc_form} from previous section.

By integrating \eqref{eq:model_ode}, we get for $t_0 \leq t \leq t_N$
\begin{equation}
    y(t) = y(t_0) e^{-\frac{t-t_0}{\tau}} + \frac{e^{-\frac{t}{\tau}}}{\tau}\int_{t_0}^{t} e^{\frac{\alpha}{\tau}}x(\alpha) d\alpha.
    \label{eq:model_ode_soln}
\end{equation}
This gives rise to $N$ constraints, one for each $t_i \in \{t_1, \ldots, t_N\}$
\begin{equation}
    y(t_i) = y(t_0) e^{-\frac{t_i-t_0}{\tau}} + \frac{e^{-\frac{t_i}{\tau}}}{\tau}\int_{t_0}^{t_i} e^{\frac{\alpha}{\tau}}x(\alpha) d\alpha.
    \label{eq:constraints}
\end{equation}
This is an under-constrained problem as there are many signals $x$ that satisfy \eqref{eq:constraints}.
Therefore, we proceed by constructing a feasible set of such signals.

Let the indicator function be denoted as
\begin{equation}
    \mathbbm{I}(x_{\min} \leq x < x_{\max}) = 
        \left\{
    	    \begin{array}{ll}
        		1  & \mbox{if } x_{\min} \leq x < x_{\max} \\
    	    	0 & \mbox{otherwise}
    	    \end{array}
        \right.
    \label{eq:indicator_fn}
\end{equation}
Consider the class of piece-wise constant signals $f_n : [t_0, t_N) \xrightarrow{} \mathbbm{R}$ for $n \in \mathbbm{Z}$ defined by
\begin{equation}
\label{eq:functionspace}
    f_n(\alpha) = \sum_{k=0}^{K_n-1} a_k \mathbbm{I}\left(t_0 +\frac{k\Delta t}{K_n} \leq \alpha < t_0 + \frac{(k+1) \Delta t}{K_n}\right), 
\end{equation}
where $K_n=2^n$ and $\Delta t = t_N - t_0$.
This function can be equivalently represented by the vector $A  = [a_0, a_1, \ldots, a_{K_n-1}]^T \in \mathbbm{R}^{K_n}$.
Note that the set of functions that can be represented by vectors drawn from $\mathbbm{R}^{K_n}$ is a subset of the set of functions that can be represented by vectors drawn from $\mathbbm{R}^{K_m}$ when $m \geq n$.

Next, consider the following important property of the class of functions we have just defined that follows from \cite[Theorem 2.26]{folland2013real}:
\begin{lemma}

Let $g: [t_0, t_N) \xrightarrow{} \mathbbm{R}$ be a Lebesgue integrable function.
Then for every $\epsilon > 0$, there exists $n \in \mathbbm{Z}$ and function $f_n$ as in \eqref{eq:functionspace} with $\|g-f_n\|_1 < \epsilon$.

\label{lem:x_to_A}
\end{lemma}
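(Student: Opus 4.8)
The plan is to prove the claim in two stages joined by the triangle inequality: first approximate $g$ by a step function built from finitely many pieces, and then refine that step function onto the dyadic grid appearing in \eqref{eq:functionspace}. Fix $\epsilon > 0$. Invoking \cite[Theorem 2.26]{folland2013real}, I would first produce a finite linear combination of indicators of intervals, i.e. a step function $h$ on $[t_0, t_N)$, with $\|g - h\|_1 < \epsilon/2$. Such an $h$ has only finitely many points of discontinuity, say $\beta_1 < \cdots < \beta_P$, and is automatically bounded, $|h| \le B$, since it takes finitely many values.

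The second and main stage is to pass from $h$, whose constancy intervals are arbitrary, to a function of the dyadic form demanded by \eqref{eq:functionspace}. For each $n$ I would define the candidate $f_n$ by choosing its coefficient $a_k$ to be the value of $h$ at the left endpoint of the $k$-th dyadic interval $I_k^n = [\,t_0 + k\Delta t/K_n,\ t_0 + (k+1)\Delta t/K_n\,)$, with $K_n = 2^n$. On any $I_k^n$ that contains none of the breakpoints $\beta_1, \ldots, \beta_P$, the function $h$ is constant, so $f_n = h$ there exactly and contributes zero error. The only intervals on which $f_n$ and $h$ can disagree are those containing at least one breakpoint, and there are at most $P$ of these, each of length $\Delta t / 2^n$.

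Estimating the error on these ``contaminated'' intervals is the crux. On each such interval the integrand $|h - f_n|$ is bounded by the oscillation of $h$, hence by $2B$, so the total $L^1$ error is at most $P \cdot 2B \cdot \Delta t / 2^n$. This bound tends to $0$ as $n \to \infty$, so I can pick $n$ large enough that $\|h - f_n\|_1 < \epsilon/2$. Combining the two stages via $\|g - f_n\|_1 \le \|g - h\|_1 + \|h - f_n\|_1 < \epsilon$ then finishes the argument.

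The only genuine obstacle is the refinement step, and it is essentially geometric: the set on which a fixed finite-piece step function fails to be constant at dyadic resolution is a union of at most $P$ dyadic intervals whose total measure vanishes as the grid is refined, and boundedness of $h$ converts this vanishing measure into a vanishing $L^1$ error. The nesting property noted just after \eqref{eq:functionspace}, namely that level-$n$ functions embed into level-$m$ functions for $m \ge n$, guarantees this refinement is consistent, so increasing $n$ never undoes the coarser approximation already achieved.
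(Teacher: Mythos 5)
Your proof is correct, and it follows the same route the paper intends: the paper offers no written proof at all, only the citation to \cite[Theorem~2.26]{folland2013real}, which yields density of step functions supported on \emph{arbitrary} finite unions of intervals. What you add --- and what the paper leaves entirely implicit --- is the second stage: Folland's theorem does not directly produce a function of the specific form \eqref{eq:functionspace}, whose pieces must be the $2^n$ equal-length dyadic subintervals of $[t_0,t_N)$, so one genuinely needs your refinement argument that a fixed step function $h$ with $P$ breakpoints and bound $B$ differs from its dyadic sampling only on at most $P$ intervals of length $\Delta t/2^n$, giving an $L^1$ error of at most $2BP\,\Delta t/2^n \to 0$. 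That estimate is the actual content of the lemma beyond the citation, and your handling of it (left-endpoint sampling, exact agreement on uncontaminated intervals, oscillation bound on the rest) is sound; the only cosmetic remark is that $n$ should be taken nonnegative for $K_n = 2^n$ to index a genuine partition, which is clearly what the paper means by $n \in \mathbbm{Z}$.
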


The above lemma allows us to associate any signal $x$ within $[t_0, t_N)$ with a function $f_n$ within machine precision for a sufficiently large $n$.
This allows us to construct our feasible set of signals as a subset of $\mathbbm{R}^{K_n}$

Next, we describe how to formulate the set of $N$ constraints in \eqref{eq:constraints} as a set of linear equations:
\begin{theorem}

For each $m \in \mathbbm{Z}$, consider $f_m$ as in \eqref{eq:functionspace} with a vector of coefficients $A_{f_m} \in \mathbbm{R}^{K_m}$.
If $f_m$ satisfies \eqref{eq:constraints}, then $A_{f_m}$ satisfies the following equation:
\begin{equation}
    Y = V A_{f_m},
    \label{eq:thm_A_space}
\end{equation}
where $Y \in \mathbbm{R}^N$ and $V \in \mathbbm{R}^{N \times K_m}$ are defined as 
    \begin{subequations}
    \begin{align}
        [Y]_j &= y(t_{j+1}) - y(t_0) e^{-\frac{t_{j+1}-t_0}{\tau}} \\
        [V]_{j,k} &= e^{-\frac{t_{j+1}-t_0}{\tau}} (e^{\gamma_{\max}}-e^{\gamma_{\min}})\mathbbm{I}(\gamma_{\min} < \gamma_{\max}),
    \end{align}
    \end{subequations} 
    where $ 0 \leq j < N,  0 \leq k < K_m$, the subscripts outside the square-brackets represent the corresponding component in the matrix,  and 
         \begin{subequations}
    \begin{align}
        \gamma_{\min} &= \max\{\frac{k}{K_m}\frac{\Delta t}{\tau}, 0\} \\
        \gamma_{\max} &= \min\{\frac{k+1}{K_m}\frac{\Delta t}{\tau}, \frac{t_{j+1}-t_0}{\tau}\}.
    \end{align}
     \end{subequations}
\label{thm:feasible_set}
\end{theorem}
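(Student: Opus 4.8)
The plan is to substitute the piecewise-constant ansatz $f_m$ directly into the integral appearing in constraint \eqref{eq:constraints} and to evaluate the resulting elementary integrals in closed form. First I would move the term $y(t_0)e^{-(t_i-t_0)/\tau}$ to the left-hand side of each constraint, so that under the index shift $i = j+1$ the left-hand side becomes exactly $[Y]_j$. It then remains to show that, when $x = f_m$, the surviving integral term equals $\sum_{k} [V]_{j,k}\, a_k$.

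Next I would use that $f_m$ is a finite linear combination of indicator functions with coefficients $a_k$, as in \eqref{eq:functionspace}. By linearity of the integral, $\int_{t_0}^{t_{j+1}} e^{\alpha/\tau} f_m(\alpha)\, d\alpha$ splits into a sum over $k$ of $a_k$ times $\int_{t_0}^{t_{j+1}} e^{\alpha/\tau}\,\mathbbm{I}(\cdots)\, d\alpha$. The product of the $k$-th indicator with the upper limit $t_{j+1}$ restricts integration to the intersection of the cell $[t_0 + k\Delta t/K_m,\, t_0 + (k+1)\Delta t/K_m)$ with $[t_0, t_{j+1})$. Since $k \geq 0$ the effective lower endpoint is the cell's left edge, while the effective upper endpoint is the smaller of the cell's right edge and $t_{j+1}$.

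I would then apply the elementary antiderivative $\int_a^b e^{\alpha/\tau}\,d\alpha = \tau(e^{b/\tau}-e^{a/\tau})$. The factor $\tau$ cancels the $1/\tau$ prefactor; factoring $e^{t_0/\tau}$ out of both exponentials and absorbing it into $e^{-t_{j+1}/\tau}$ turns the leading coefficient into $e^{-(t_{j+1}-t_0)/\tau}$. Rescaling the two endpoints by $\tau$ and subtracting $t_0$ produces precisely $\gamma_{\min}$ and $\gamma_{\max}$: the $\max\{\cdot, 0\}$ in $\gamma_{\min}$ records the intersection of the cell with the lower limit $t_0$, and the $\min$ in $\gamma_{\max}$ records the intersection with the upper limit $t_{j+1}$. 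Reading off the coefficient of each $a_k$ then gives $[V]_{j,k}$.

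I expect the only delicate point to be the bookkeeping of the integration limits rather than any analytical difficulty. In particular, when the $k$-th cell lies entirely to the right of $t_{j+1}$, the intersection is empty and the contribution must vanish; this is exactly what the factor $\mathbbm{I}(\gamma_{\min} < \gamma_{\max})$ enforces, since in that degenerate case $\gamma_{\min} \geq \gamma_{\max}$. The main effort is therefore to confirm that the $\max$/$\min$ structure captures every boundary configuration --- cells straddling $t_{j+1}$, cells lying wholly before it, and the empty case --- after which every integral is simply that of a single exponential over an explicit interval.
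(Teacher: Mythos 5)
Your proposal is correct and follows essentially the same route as the paper's proof: substitute the piecewise-constant ansatz into the integrated ODE, use linearity to reduce each constraint to a sum of elementary exponential integrals over the intersection of each cell with $[t_0, t_{j+1})$, and read off $\gamma_{\min}$, $\gamma_{\max}$ and the indicator from the resulting closed form. The paper organizes the computation via two explicit changes of variables ($\beta$ and then $\gamma$) rather than evaluating directly and rescaling afterward, but the content is identical.
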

\begin{proof}
Following Lemma~\ref{lem:x_to_A}, there exists $n$ such that we can substitute for $x$ in \eqref{eq:model_ode_soln} and using change of variables $\beta = \frac{\alpha-t_0}{\Delta t}$ and using $s=t-t_0$, we get

\begin{equation}
    y(t) = y(t_0) e^{-\frac{s}{\tau}} + \frac{\Delta t}{\tau}e^{-\frac{s}{\tau}}\int_{0}^{\frac{s}{\Delta t}} e^{\frac{\beta \Delta t}{\tau}}f_n(t_0 + \beta\Delta t) d\beta.
\end{equation}

Substituting for $f_n$, the second term in the above equation becomes

\begin{equation}
    \sum_{k=0}^{K_n-1} a_k e^{-\frac{s}{\tau}}  \int_{0}^{\frac{s}{\Delta t}} e^{\frac{\beta \Delta t}{\tau}} \mathbbm{I}(\frac{k}{K_n} \leq \beta < \frac{k+1}{K_n}) \frac{\Delta t}{\tau} d\beta.
\end{equation}
Using another change of variables $\gamma = \frac{\beta \Delta t}{\tau}$,
\begin{equation}
    \sum_{k=0}^{K_n-1} a_k e^{-\frac{s}{\tau}} \Big( \int_{\gamma_{\min}^{n,k,s}}^{\gamma_{\max}^{n,k,s}} e^{\gamma} d\gamma \Big) \mathbbm{I}(\gamma_{\min}^{n,k,s} < \gamma_{\max}^{n,k,s}),
\end{equation}
where 
\begin{subequations}
    \begin{align}
    \gamma_{\min}^{n,k,s} &= \max\{\frac{k}{K_n}\frac{\Delta t}{\tau}, 0\} \\
    \gamma_{\max}^{n,k,s} &= \min\{\frac{k+1}{K_n}\frac{\Delta t}{\tau}, \frac{s}{\tau}\}.
    \end{align}
\end{subequations}
The solution to the integral can now be written in closed form yielding 
\begin{equation}
\begin{split}
    y(t) - & y(t_0) e^{-\frac{s}{\tau}}  = \\  & \sum_{k=0}^{K_n-1} a_k e^{-\frac{s}{\tau}} (e^{\gamma_{\max}^{n,k,s}}-e^{\gamma_{\min}^{n,k,s}}) \mathbbm{I}(\gamma_{\min}^{n,k,s} < \gamma_{\max}^{n,k,s}).
    \label{eq:ode_soln_sum}
\end{split}
\end{equation}

Substituting for $t$ with values $t_1, t_2, \ldots, t_N$, we retrieve \eqref{eq:thm_A_space} with $m=n$.
Therefore, for every $f_n$ satisfying \eqref{eq:constraints}, the corresponding $A_{f_n}$ satisfies \eqref{eq:thm_A_space} by construction.

\end{proof}

The above theorem allows us to represent the constraints in \eqref{eq:constraints} as a system of linear equations \eqref{eq:thm_A_space}.
The choice of $n$ divides the interval $[t_0, t_N)$ into uniform intervals of size $\frac{\Delta t}{2^n}$ with discontinuities at integral multiples of $\frac{\Delta t}{2^n}$.
Therefore, we align the observation time instances $\{t_0, t_1, \ldots, t_N\}$, which are typically uniformly spaced, with the discontinuities of $f_n$ by choosing $N$ to be a power of $2$. 
Recall that increasing $n$ strictly increases the set of signals that are represented.
Therefore, the set of all possible feasible solutions to \eqref{eq:constraints} is obtained by letting $n \xrightarrow{} \infty$. 
The next sections shows that temporal sparsity can be leveraged to select a unique signal from this feasible set.
\section{Motion Deblurring using Temporal Sparsity}
\label{sec:deblurring_equivalence}

Recall from Section~\ref{subsec:microbolometer_model} that motion blur arises from the thermal inertia of each pixel.
In the previous section, we saw that there are infinitely many signals that agree with the data returned by the camera as well as the camera parameter $\tau$.
Yet, following Section~\ref{subsec:spatial_vs_temporal}, we need to choose a unique signal from this feasible set using only that pixel's temporal data. 
We address this by pruning the feasible set based on two observations presented here.

To understand our first observation, consider the time between two consecutive observations $[t_i, t_{i+1}]$.
A large value of $n$ would divide this interval into many small intervals, each of length $\frac{\Delta t}{2^n}$.
Suppose we consider a function $f_n: [t_0,t_N) \to \mathbbm{R}$ as defined in \eqref{eq:functionspace} and its corresponding vector representation $A = [a_0,\ldots,a_{K_n-1}]^T \in \mathbbm{R}^{K_n}$.
Note when $n$ is large, any pair $a_p$ and $a_q$
corresponding to intervals between $[t_i,t_{i+1}]$
can be increased in equal and opposite directions while effectively cancelling each other out at $t_{i+1}$.
Therefore, given a signal with representation $A$ satisfying \eqref{eq:constraints}, there are many pairs of coefficients that can be changed when $n$ is large to obtain new signals that satisfy \eqref{eq:constraints}. 

Next, to describe our second observation consider a typical scene to be imaged.
The surfaces in the scene are locally at similar temperatures with distinct jumps at object or material boundaries that appear as edges in the thermal image.
For most applications, these edges capture sufficient information in the image.
The trajectory of scene points imaged by a single pixel would have temperatures that change drastically only at the few instances when the imaged scene point passes through a boundary. 
Based on the above two observations, we propose the following assumption about the signal of interest:
\begin{assumption}
    The signal of interest, $T_m^{ss}$, can be approximated by a piecewise constant signal that satisfies \eqref{eq:constraints} with the minimum number of transitions between consecutive components of its representation vector. 
    \label{asm:temporal_sparsity}
\end{assumption}
To understand this assumption, recall that as a result of Lemma \ref{lem:x_to_A} any integrable function can be approximated arbitrarily well using piecewise constant functions as the number of discontinuities is allowed to go to infinity.
Reducing the number of such discontinuities would discourage redundant transitions from coefficients $a_p$ to $a_q$ as in the first observation. 
By minimizing the number of step changes, we also leverage the second observation that the temperature of a pixel in a thermal image does not typically have many large changes as a function of time. 

\begin{algorithm}[t]
\caption{Pixelwise Deblurring using QP}
\label{alg:summary}
\begin{algorithmic}
\REQUIRE{$\{t_0, \ldots, t_N\}, \{ y(t_0), \ldots, y(t_N)\}$, $T$, $\log_2(N) \in \mathbbm{N}$, $\lambda$}
\ENSURE{$A^* \in \mathbbm{R}^{K_n} \iff x : [t_0, t_N) \xrightarrow{} \mathbbm{R}$}
\STATE Choose $n$ such that $\frac{\Delta t}{2^n} \ll T$
\STATE Construct $V,Y$ from \eqref{eq:thm_A_space}
\STATE Construct Haar Matrix $H$ of size $K_n \times K_n$
\STATE Setup objective to minimizing $\|Y-VH^TD\|_2 + \lambda \|D\|_1$
\STATE Solve quadratic program to obtain $D^*$
\RETURN $A^* = H^TD^*$
\end{algorithmic}
\end{algorithm}

Computing the signal of interest that satisfies this assumption can be formulated as the solution to the following optimization problem:
\begin{align}
    A^* = \argmin_{A \in \mathbbm{R}^{K_n} } &~ \rho(A) \label{eq:noncvx_optim} \\
    \textrm{s.t.} \quad VA &= Y \nonumber,
\end{align}
where $A^*$ denotes the vector representing the approximation to the signal of interest that satisfies Assumption \ref{asm:temporal_sparsity}, $V,Y$ and $K_n$ are as in Theorem~\ref{thm:feasible_set} for a large value of $n$, and $\rho : \mathbbm{R}^{K_n} \xrightarrow{} \mathbbm{W}$ counts the number of transitions between consecutive components of the representation vector $A$ (i.e. $\sum_{k=1}^{K_{n}-1} \mathbbm{I}\{a_{k-1} \neq a_{k}\}$).
This is a non-convex optimization problem. 
In this paper, we instead solve a closely related convex optimization problem described below.

Let $D$ denote the coefficients of $A$ in the Haar system such that $D = HA$ where $H$ is the Haar matrix of size $K_n \times K_n$\cite{chui2016introduction}.
Note that a signal with few transitions would also be sparse in the Haar system.
Therefore, we can relax \eqref{eq:noncvx_optim} to
\begin{equation}
    D^* =  \argmin_{D\in \mathbbm{R}^{K_n}} ~ \|VH^TD - Y\|_2 + \lambda \|D\|_1  ,
\label{eq:qp_program} 
\end{equation}
where $\lambda$ is a hyper-parameter that balances between respecting \eqref{eq:thm_A_space} and temporal sparsity from Assumption~\ref{asm:temporal_sparsity}.
In particular, we have relaxed the equality constraint in \eqref{eq:noncvx_optim} to account for measurement noise.
This optimization problem is called a LASSO problem \cite{tibshirani1996regression} and can be solved efficiently using a Quadratic Programming (QP) solver~\cite{cplex}. 
Algorithm~\ref{alg:summary} summarizes the proposed approach to deblur a single pixel.

\section{Experimental Results}
\label{sec:results}

The experimental results are divided into five subsections. 
First, we validate our Microbolometer Image Formation Model. 
We then illustrate the proposed approach using synthetic data for a single pixel.
Next, we introduce the datasets and baselines presented in the results.
Next, we present deblurring results for typical outdoor scenes.
Finally, we present quantitative evaluation of state-of-the-art object detectors on images restored using different deblurring algorithms.

All the experiments were carried out using a radiometrically calibrated FLIR A655sc camera capturing frames every $5\mathrm{ms}$ at $640\times120$ resolution.
The object parameters were set to unity for both emissivity and atmospheric transmissitivity.
The following parameters were used throughout the experiments: $\tau = 11\mathrm{ms}$, $N=17$, $n=7$ and $\lambda=0.5$.
Recall that the proposed approach can deblur a frame using any consecutive set of $N+1$ measurements of which it is a part. 
In our experiments, the past $N$ frames were used, unless noted otherwise, so that future information is not required. 
The quadratic programs were solved using IBM CPLEX v12.10 through the Python API~\cite{cplex}.
On average over $20$ clips of $17$ frames each, processing a single pixel took $212\mathrm{ms}$ including problem setup on an Intel Xeon 8170M server processor running at $2.1\mathrm{GHz}$. 
The running time can be further reduced by using optimised code with the C++ API.

\subsection{Validating M-IFM}
\label{subsec:validate_model}

\begin{figure}[t]
    \centering
    \includegraphics[width=\linewidth]{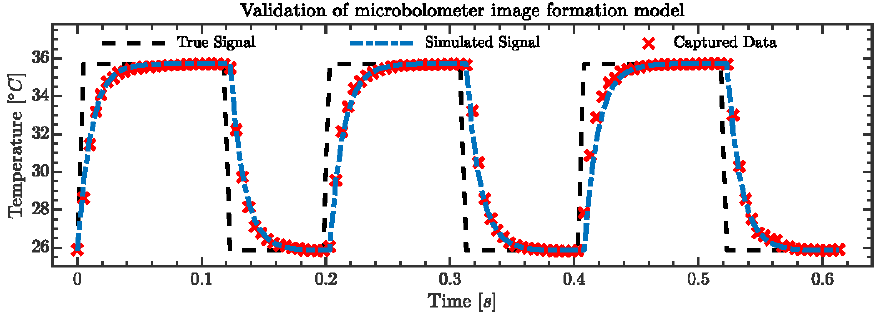}
    \caption{Temporal plot of a single pixel from a real-world microbolometer when subjected to a series of rectangular pulses. The simulated signal for $\tau=11\mathrm{ms}$ agrees well with the captured real-world data.}
    \label{fig:validate_single_pixel}
\end{figure}

Microbolometers utilise a number of internal signal processing algorithms to compute the temperature values that we receive as an image.
Despite this, we show here that the overall response of a microbolometer to changes in scene still follows \eqref{eq:microbolometer_rc_form}.
To this end, we setup the camera to view a constant warm surface, such as a powered on monitor and move a board at a cooler temperature in front of it repeatedly such that an individual pixel experiences a series of rectangular pulses.
We manually approximated the rectangular pulse and fit the M-IFM to the captured data to get $\tau=11\mathrm{ms}$. Fig.~\ref{fig:validate_single_pixel} illustrates good agreement between the captured data and the signal simulated using $\tau=11\mathrm{ms}$ thus validating our M-IFM.
We use this value of $\tau$ in the rest of the experiments.

\subsection{Deblurring at a single pixel}
\label{subsec:single_pixel}

\begin{figure}[t]
    \centering
        \includegraphics[width=\linewidth]{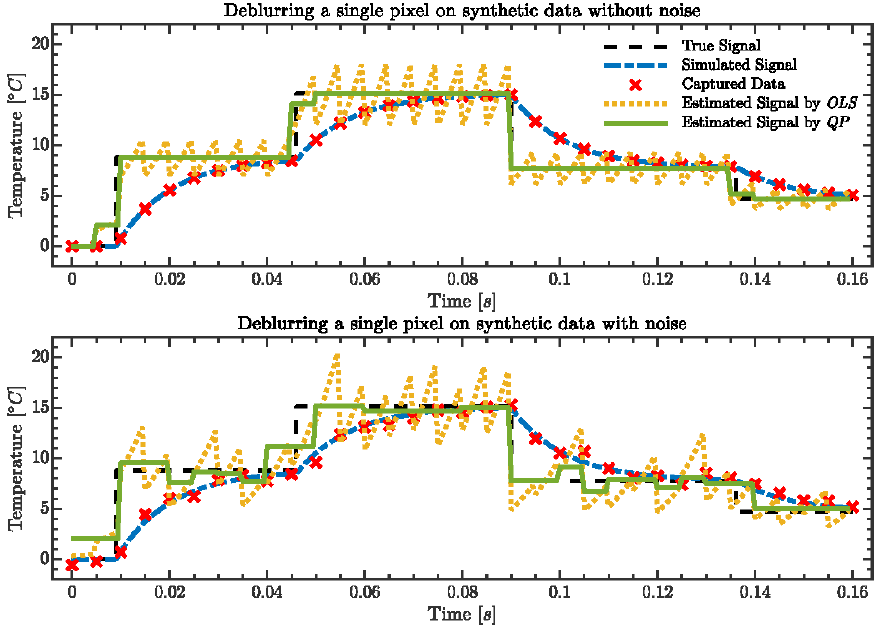}
    \caption{An illustration that the ordinary least squares solution is unable to capture the true underlying signal even without noise. Whereas the QP solution aligns well with the ground truth signal even after the introduction of noise.}
    \label{fig:single_pixel_deblur}
\end{figure}

In this experiment, we analyse the proposed method for deblurring a single pixel.
We generated a piece-wise constant signal and simulated the microbolometer response with $\tau=11\mathrm{ms}$ with additive Gaussian noise of standard deviation $0.5 \mathrm{K}$.
For comparison, we process the same signal with and without noise.
The Ordinary Least Squares (OLS) solution to \eqref{eq:thm_A_space} does not align well with the true signal as seen in Fig.~\ref{fig:single_pixel_deblur} even without noise.
While the true signal has $4$ transitions, the number of transitions in OLS is much higher.
However, the QP solution agrees well with the true signal in both the noisy and noiseless scenarios.
The $\lambda$ values used here were $0.001$ and $2$ for the noiseless and noisy signals, respectively. 
This demonstrates the utility of sparsity as presented in Section~\ref{sec:deblurring_equivalence}.

\begin{figure}[!ht]
    \centering
    \begin{tabular}{@{}c@{} c}
        \rotatebox[origin=c]{90}{\footnotesize{Blurry}} & \raisebox{-0.45\height}{\includegraphics[width=\linewidth]{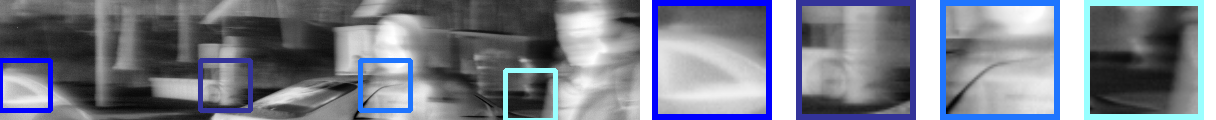}} \\[12pt]
        \rotatebox[origin=c]{90}{\footnotesize{ECP}} & \raisebox{-0.4\height}{\includegraphics[width=\linewidth]{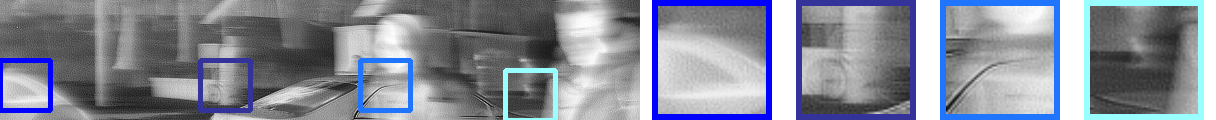}} \\[12pt]
        \rotatebox[origin=c]{90}{\footnotesize{SRN}} & \raisebox{-0.4\height}{\includegraphics[width=\linewidth]{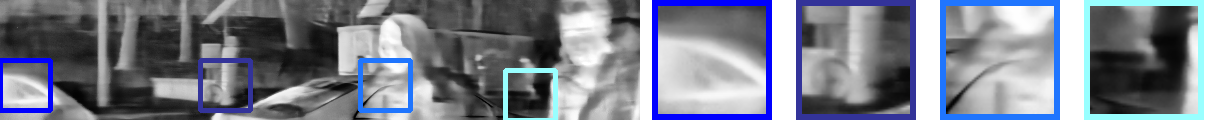}} \\[12pt]
        \rotatebox[origin=c]{90}{\footnotesize{DGAN}} & \raisebox{-0.4\height}{\includegraphics[width=\linewidth]{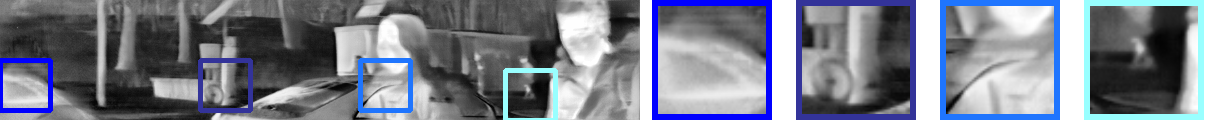}} \\[12pt]
        \rotatebox[origin=c]{90}{\footnotesize{Ours}} & \raisebox{-0.4\height}{\includegraphics[width=\linewidth]{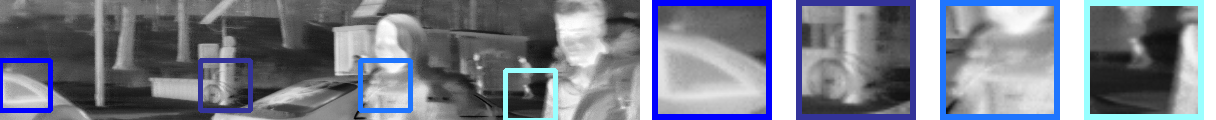}} \\
    \end{tabular}
    \caption{An illustration of the output of various deblurring algorithms on a thermal image (top row) of an outdoor scene with objects at different depths. 
    In particular, call out 2 (third column from left) illustrates a bicycle frame that is recovered by our approach while only the front wheel is recovered by some of the other approaches. Call out 3 (fourth column from left) depicts the deblurring result when the motion blur in the input image sequence blended the car and the person in front of it.}
    \label{fig:seq03_frame2141}
\end{figure}

\subsection{Dataset and Baselines}
\label{subsec:blurry_thermal_dataset}

We collected a test dataset, which we refer to as the \emph{Blurry Thermal Data set}, with generic camera motions in outdoor scenes with moving pedestrians and vehicles. 
Note that obtaining ground truth thermal images without motion blur using microbolometers is infeasible as there is no control over exposure time. 
One could use a cooled photon detector, but that would require pixel-wise correspondence between the microbolometer and the expensive cooled photon detector, which is a challenging task. 
Therefore, we are unable to quantitatively evaluate the deblurring algorithms directly. 

As a proxy, we evaluate the utility of such deblurring algorithms by benchmarking the performance of state-of-the-art object detectors on blurred and deblurred thermal images using our proposed algorithm and a variety of existing, state-of-the-art approaches. 
To this end, we annotated a subset of $406$ images using the COCO Annotator \cite{cocoannotator} for a total of $1168$ pedestrians and $831$ vehicles. 
We compared our approach against several classes of deblurring algorithms. Single-image deblurring algorithms: 1) non-learning based method (ECP~\cite{DBLP:conf/cvpr/YanRGWC17}), 2) learning based approaches (SRN~\cite{DBLP:conf/cvpr/TaoGSWJ18}, DGAN~\cite{DBLP:conf/iccv/KupynMWW19}). 
Video deblurring algorithms: 3) learning based methods (STFAN~\cite{DBLP:conf/iccv/ZhouZPZXR19}, EDVR~\cite{DBLP:conf/cvpr/WangCYDL19}). 
Since thermal images are in units of temperature, we map the temperature range of the images to $[0,1]$ and used the floating point representation directly, whenever possible, in the baseline algorithms to avoid loss of precision due to intermediate 8-bit image formats.

\subsection{Motion Deblurring of Natural Scenes}
\label{subsec:deblur_images}

This section qualitatively compares the performance of the various deblurring approaches. 
For brevity, we compare performance with the non-learning algorithm ECP and two strong learning-based algorithms, SRN and DGAN. 
Comparison with all baseline algorithms are presented as supplementary material for reference.
Figs.~\ref{fig:seq03_frame2141} and~\ref{fig:seq04_frame2610} illustrate images captured at an intersection with pedestrians and cars moving in different directions at different depths causing highly non-uniform blur.
The non-learning based, ECP algorithm merely sharpens the image without removing any blur as it assumes uniform blur kernel.
In all the call outs, our proposed model-based approach achieves equal or superior deblurring without any artifacts suffered by learning based algorithms.

\begin{figure}[!ht]
    \centering
        \begin{tabular}{@{}c@{} c}
\rotatebox[origin=c]{90}{\footnotesize{Blurry}} & \raisebox{-0.45\height}{\includegraphics[width=\linewidth]{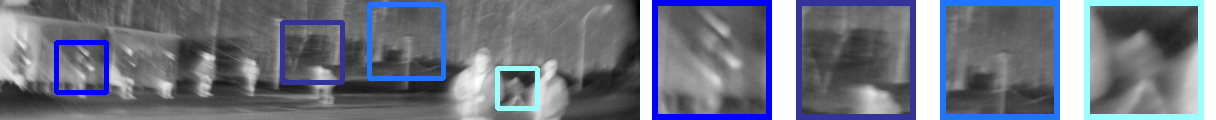}} \\[12pt]
        \rotatebox[origin=c]{90}{\footnotesize{ECP}} & \raisebox{-0.4\height}{\includegraphics[width=\linewidth]{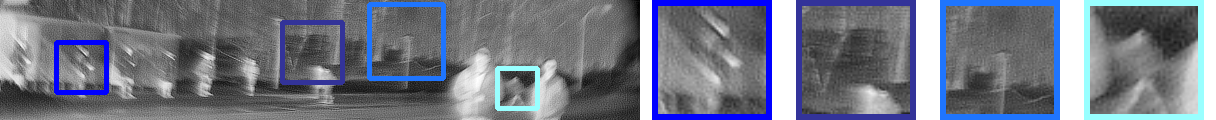}} \\[12pt]
        \rotatebox[origin=c]{90}{\footnotesize{SRN}} & \raisebox{-0.4\height}{\includegraphics[width=\linewidth]{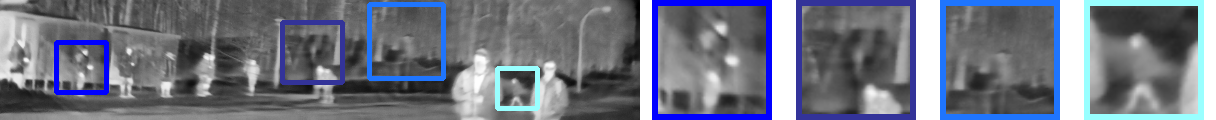}} \\[12pt]
        \rotatebox[origin=c]{90}{\footnotesize{DGAN}} & \raisebox{-0.4\height}{\includegraphics[width=\linewidth]{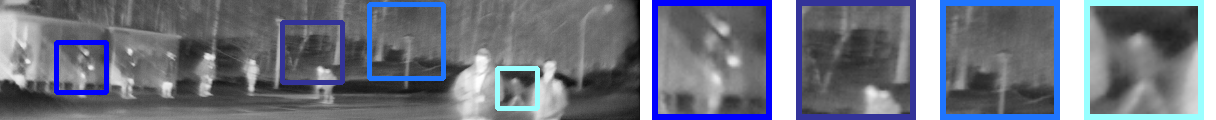}} \\[12pt]
        \rotatebox[origin=c]{90}{\footnotesize{Ours}} & \raisebox{-0.4\height}{\includegraphics[width=\linewidth]{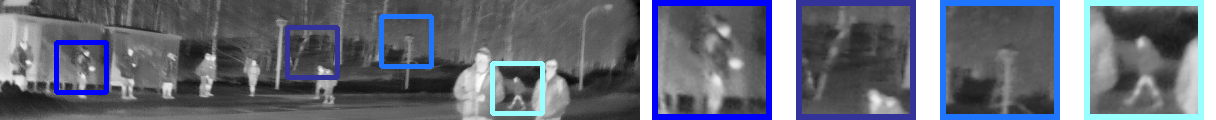}} \\
    \end{tabular}
    \caption{An illustration of the output of various deblurring algorithms on a thermal image (top row) when an arbitrary camera motion.
    ECP sharpens the image without removing blur. 
    SRN introduces artifacts seen in call outs 2 and 3 (columns 3 and 4 from the left, respectively). 
    DGAN has residual blur in call out 4 (column 5 from the left).
    In  comparison our proposed approach shows no artifacts or blur.}
    \label{fig:seq04_frame2610}
\end{figure}

\subsection{Object Detection under Severe Motion Blur}
\label{subsec:object_detection}

\begin{figure}[!ht]
    \centering
    \begin{tabular}{@{}c@{} c}
        \rotatebox[origin=c]{90}{\footnotesize{Blurry}} & \raisebox{-0.5\height}{\includegraphics[width=0.95\linewidth]{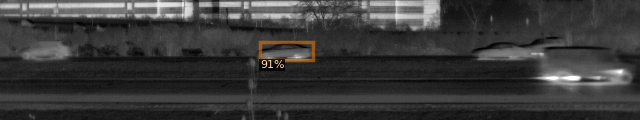}} \\[25pt]
        \rotatebox[origin=c]{90}{\footnotesize{ECP}} & \raisebox{-0.5\height}{\includegraphics[width=0.95\linewidth]{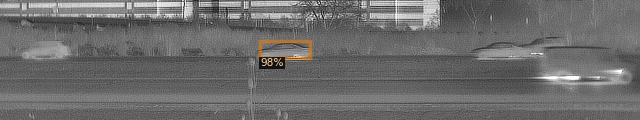}} \\[25pt]
        \rotatebox[origin=c]{90}{\footnotesize{SRN}} & \raisebox{-0.5\height}{\includegraphics[width=0.95\linewidth]{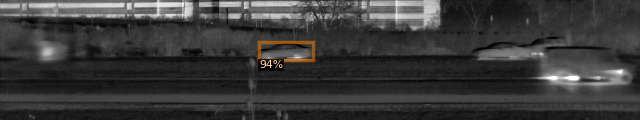}} \\[25pt]
        \rotatebox[origin=c]{90}{\footnotesize{DGAN}} & \raisebox{-0.5\height}{\includegraphics[width=0.95\linewidth]{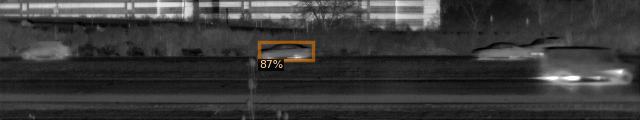}} \\[25pt]
        \rotatebox[origin=c]{90}{\footnotesize{Ours}} & \raisebox{-0.5\height}{\includegraphics[width=0.95\linewidth]{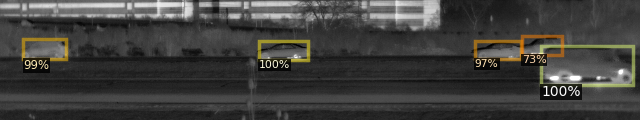}} \\[25pt]
    \end{tabular}
    \caption{An illustration of object detection results from RetinaNet using the original image, the output of various state-of-the-art deblurring techniques, and our proposed approach. The stationary camera is viewing an expressway with fast moving cars (top image). The images deblurred using our proposed approach enables the detector to find all the cars in the scene.}
    \label{fig:seq05_frame22476}
\end{figure}

\begin{table}[!t]
\centering
\begin{tabular}{|l|l|c|c|c|c|}
\hline
\textbf{Detector} & \textbf{Image Type} & \textbf{Person} & \textbf{Car} & \textbf{Total} & \textbf{$\Delta$AP}\\
\hline
\multirow{7}{4em}{Faster R-CNN}
 & Blurred & 13.62 & 26.88 & 20.25 & -\\
 \cline{2-6}
 & ECP \cite{DBLP:conf/cvpr/YanRGWC17} & 11.27 & 24.11 & 17.69 & -2.56 \\
 & STFAN \cite{DBLP:conf/iccv/ZhouZPZXR19}& 15.18 & 28.23 & 21.71 & 1.46 \\
 & EDVR \cite{DBLP:conf/cvpr/WangCYDL19} & 14.80 & 29.25 & 22.02 & 1.77 \\
 & SRN \cite{DBLP:conf/cvpr/TaoGSWJ18} & 17.46 & 30.99 & 24.23 & 3.98 \\
 & DGAN \cite{DBLP:conf/iccv/KupynMWW19} & 16.94 & 31.40 & 24.16 & 3.91 \\
 & \textbf{Ours} & \textbf{21.81} & \textbf{40.53} & \textbf{31.17} & \textbf{10.92} \\
\hline
\hline

\multirow{7}{4em}{RetinaNet}
 & Blurred & 12.99 & 29.92 & 21.46 & - \\
 \cline{2-6}
 & ECP \cite{DBLP:conf/cvpr/YanRGWC17} & 10.60 & 28.08 & 19.34 & -2.12 \\
 & STFAN \cite{DBLP:conf/iccv/ZhouZPZXR19} & 14.69 & 30.20 & 22.44 & 0.98 \\
 & EDVR \cite{DBLP:conf/cvpr/WangCYDL19} & 14.32 & 31.61 & 22.96 & 1.5 \\
 & SRN \cite{DBLP:conf/cvpr/TaoGSWJ18} & 14.99 & 31.96 & 23.48 & 2.02 \\
 & DGAN \cite{DBLP:conf/iccv/KupynMWW19} & 16.18 & 32.84 & 24.51 & 3.05 \\
 & \textbf{Ours} & \textbf{20.61} & \textbf{41.15} & \textbf{30.88} & \textbf{9.42} \\
\hline
\hline

\multirow{7}{4em}{YoloV3}
 & Blurred & 13.14 & 24.80 & 18.97 & - \\
 \cline{2-6}
 & ECP \cite{DBLP:conf/cvpr/YanRGWC17} & 12.35 & 21.68 & 17.02 & -1.95 \\
 & STFAN \cite{DBLP:conf/iccv/ZhouZPZXR19} & 14.10 & 25.58 & 19.84 & 0.87 \\
 & EDVR \cite{DBLP:conf/cvpr/WangCYDL19} & 14.22 & 26.03 & 20.13 & 1.16 \\
 & SRN \cite{DBLP:conf/cvpr/TaoGSWJ18} & 15.93 & 26.73 & 21.33 & 2.36 \\
 & DGAN \cite{DBLP:conf/iccv/KupynMWW19} & 15.20 & 27.64 & 21.42 & 2.45 \\
 & \textbf{Ours} & \textbf{18.63} & \textbf{34.55} & \textbf{26.59} & \textbf{7.62} \\
\hline
\end{tabular}
\caption{The Average Precision scores for three state-of-the-art object detectors when using blurred images or images deblurred by our proposed approach and various state-of-the-art techniques. 
$\Delta AP$ shows the change in performance between using deblurred images and the original blurred images. Deblurring using our proposed algorithm increases performance by more than twice that achieved by using any tested, state-of-the-art deblurring methods.}
\label{tbl:mAP_detection}
\end{table}

\begin{figure}[ht]
    \centering
    \begin{tabular}{@{}c@{} c}
        \rotatebox[origin=c]{90}{\footnotesize{Blurry}} & \raisebox{-0.5\height}{\includegraphics[width=0.95\linewidth]{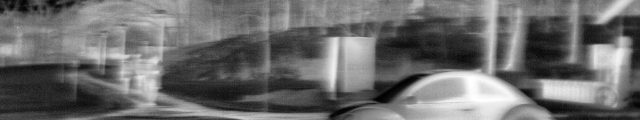}} \\[25pt]
        \rotatebox[origin=c]{90}{\footnotesize{ECP}} & \raisebox{-0.5\height}{\includegraphics[width=0.95\linewidth]{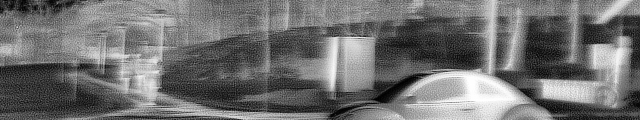}} \\[25pt]
        \rotatebox[origin=c]{90}{\footnotesize{SRN}} & \raisebox{-0.5\height}{\includegraphics[width=0.95\linewidth]{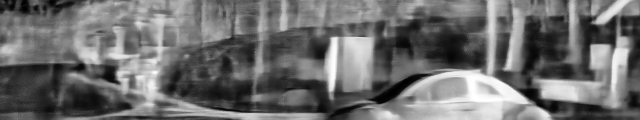}} \\[25pt]
        \rotatebox[origin=c]{90}{\footnotesize{DGAN}} & \raisebox{-0.5\height}{\includegraphics[width=0.95\linewidth]{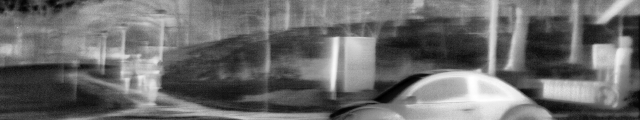}} \\[25pt]
        \rotatebox[origin=c]{90}{\footnotesize{Ours}} & \raisebox{-0.5\height}{\includegraphics[width=0.95\linewidth]{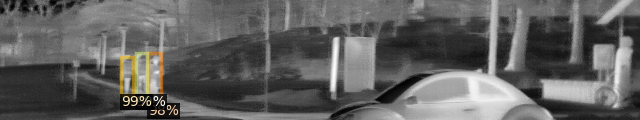}} \\[25pt]
    \end{tabular}
    \caption{An illustration of the object detection results from Faster R-CNN using the original image, the output of various state-of-the-art deblurring techniques, and our proposed aproach. 
    SRN introduces large artefacts while residual blur is still visible in the DGAN and ECP results.
    Our proposed approach eliminates blur and aids the detector to identify all three pedestrians in the image which were otherwise missed.}
    \label{fig:seq03_frame1957}
\end{figure}

We consider three state-of-the-art object detectors, namely Faster R-CNN~\cite{DBLP:journals/pami/RenHG017}, RetinaNet~\cite{DBLP:journals/pami/LinGGHD20} and Yolov3~\cite{DBLP:journals/corr/abs-1804-02767}.  
For Faster R-CNN and RetinaNet, we use their state-of-the-art implementation in Detectron2~\cite{wu2019detectron2} with Feature Pyramid Networks~\cite{DBLP:conf/cvpr/LinDGHHB17} and a ResNet-101~\cite{DBLP:conf/cvpr/HeZRS16} backbone.
The object detectors were trained on the FLIR Thermal Data set~\cite{fliradas} and their mean Average Precision(mAP) scores on the corresponding FLIR Thermal validation set were $46.57\%$, $42.54\%$ and $38.38\%$ for object detection, respectively.
Note that we only evaluate for \texttt{Person} and \texttt{Car} categories with our Blurry Thermal Data set. 
All evaluations were done using the stringent  COCO~\cite{DBLP:conf/eccv/LinMBHPRDZ14} metric which takes average over Intersection-over-Union (IoU) thresholds every $0.05$ from $0.5$ to $0.95$.

The trained object detectors were tested on the Blurry Thermal Data set.
Fig.~\ref{fig:seq05_frame22476} depicts object detection results for RetinaNet on a highway scene. 
All the vehicles are detected when using images restored by our approach. 
Fig.~\ref{fig:seq03_frame1957} illustrates object detection results for Faster R-CNN for a test image containing multiple pedestrians approaching an intersection.
Our proposed approach aids the detector to find all three pedestrians while the other approaches were unable to deblur the pedestrians.
The performance of the detectors with the original blurred images and the images after deblurring using different algorithms are summarized in Table~\ref{tbl:mAP_detection}.

To summarize, the performance of all three detectors on the blurry images is significantly lower than their performance on the FLIR validation set.
This emphasizes the need for motion deblurring algorithms.
The non-learning based ECP generates deblurred images that actually degrade the performance of all three detectors when compared to just applying the blurred images directly to each of the detectors.
The learning-based algorithms improve object detection to various degrees with DGAN and SRN performing better than other baselines. 
However, our proposed approach more than doubles this improvement in Average Precision scores for all three detectors.
While using the strong baseline DGAN to restore blurred images increases the performance by $3.91\%$, $3.05\%$ and $2.45\%$ for Faster R-CNN, RetinaNet and YoloV3, respectively. 
Using our proposed approach to restore blurred images increases the performance by
$10.92\%$, $9.42\%$ and $7.62\%$ for Faster R-CNN, RetinaNet and YoloV3, respectively.

\section{Discussion and Conclusion}
\label{sec:discussion}

Motion is inherent to mobile robots where the world continuously evolves in time. 
In this context, microbolometers provide a unique sensing modality as they are constantly exposed to the scene, accumulating the evolution in time. 
Until now, the thermal inertia of microbolometers has largely been considered as an unwanted phenomenon that needs to be minimized if not eliminated. 
Our work shows that spreading the information in time can in-fact be leveraged using intelligent algorithms to side-step the prominent challenges in motion deblurring - 6D relative motion and dynamic objects in the scene. 
The pixel-wise formulation of our approach allows parallel processing to be utilised to speed up motion deblurring. 
Furthermore, a sparse set of keypoints could be deblurred on-demand providing an attractive utility for robotic vision. 

Motion blur in thermal cameras is a primary impediment for their wider adoption in robotics despite their ability to see in the dark.
The lack of exposure control and global shutter features further aggravates the problem.
In this work, we showed that the motion blur in thermal cameras is caused due to the thermal inertia of each pixel that is unique to the physics of image formation in these cameras.
We showed that reversing this effect amounts to solving an underdetermined system of linear equations which have infinitely many solutions.
By leveraging high frame rate and temporal sparsity, we formulated LASSO optimization problem and solved it using Quadratic Programming(QP) to estimate the incident power at each pixel independently and yet recovered motion deblurred images that are spatially coherent.
Our work shows that just temporal sparsity of signals is sufficient to achieve motion deblurring without modeling the relative motion or scene dynamics.

\section*{Acknowledgments}
This work was supported by a grant from Ford Motor Company via the Ford-UM Alliance under award N022884. The authors would also like to thank the reviewers for their useful comments and suggestions.

%% Use plainnat to work nicely with natbib. 

\bibliographystyle{plainnat}
\bibliography{references}

\end{document}